\documentclass[10pt,twocolumn,letterpaper]{article}

\usepackage{cvpr}      


\usepackage{graphicx}
\usepackage{amsmath}
\usepackage{amssymb}
\usepackage{booktabs}

\makeatletter
\@namedef{ver@everyshi.sty}{}
\makeatother
\usepackage{tikz}

\relax

\usepackage{times}  
\usepackage{helvet} 
\usepackage{courier}  
\usepackage{graphicx} 
\usepackage{caption} 
\frenchspacing  
\setlength{\pdfpagewidth}{8.5in}  
\setlength{\pdfpageheight}{11in}  
\usepackage{xspace}
\usepackage{amsthm}
\usepackage{booktabs}

\usepackage{multirow}
\usepackage{ upgreek }
\usepackage[accsupp]{axessibility}

\usepackage{wrapfig}
\usepackage[export]{adjustbox}
\usepackage{algorithm}
\usepackage{algorithmic}
\usepackage{tcolorbox}

\newtheorem{theorem}{Theorem}
\newtheorem{lemma}[theorem]{Lemma}

\usepackage{graphicx}

\usepackage{amsmath}

\newcommand\figref[1]{Fig.~\ref{#1}}

\newcommand\tabref[1]{Table~\ref{#1}}

\newcommand\secref[1]{Sec.~\ref{#1}}
\newcommand\equref[1]{Eq.(\ref{#1})}

\usepackage{framed}

\usepackage{appendix}

\usepackage[pagebackref,breaklinks,colorlinks]{hyperref}

\usepackage[capitalize]{cleveref}
\crefname{section}{Sec.}{Secs.}
\Crefname{section}{Section}{Sections}
\Crefname{table}{Table}{Tables}
\crefname{table}{Tab.}{Tabs.}
\newcommand{\tool}{NICGSlowDown\xspace}


\begin{document}
\title{\tool: Evaluating the Efficiency Robustness of \\ Neural Image Caption Generation Models}

\author{Simin Chen \xspace\xspace\xspace\xspace\xspace\xspace\xspace Zihe Song \xspace\xspace\xspace\xspace\xspace\xspace\xspace   Mirazul Haque \xspace\xspace\xspace\xspace\xspace\xspace\xspace Cong Liu   \xspace\xspace\xspace\xspace\xspace\xspace\xspace Wei Yang\\
The University of Texas at Dallas\\
800 W Campbell Rd, Richardson, TX 75080\\
{\tt\small \{simin.chen, zihe.song, mirazul.haque, cong, wei.yang\}@utdallas.edu}
}
\maketitle

\pagestyle{empty} 

\begin{abstract}
Neural image caption generation~(NICG) models have received massive attention from the research community due to their excellent performance in visual understanding. Existing work focuses on improving NICG model accuracy while efficiency is less explored. However, many real-world applications require real-time feedback, which highly relies on the efficiency of NICG models. Recent research observed that the efficiency of NICG models could vary for different inputs. This observation brings in a new attack surface of NICG models, i.e., An adversary might be able to slightly change inputs to cause the NICG models to consume more computational resources. To further understand such efficiency-oriented threats,  we propose a new attack approach, \tool, to evaluate the efficiency robustness of NICG models. Our experimental results show that \tool can generate images with human-unnoticeable perturbations that will increase the NICG model latency up to 483.86\%. We hope this research could raise the community’s concern about the efficiency robustness of NICG models.
\end{abstract}

\section{Introduction}
Neural Image Caption Generation~(NICG) models have received wide attention from both academia and industry in recent years~\cite{anderson18cvpr,gan17cvpr,vaswani17nips,vinyals15cvpr,wang20mm}. 
NICG model combines computer vision and natural language processing techniques for image understanding and textual description generation. 
Designing NICG models is a challenging task but could have a massive impact in the real world~\cite{anderson18cvpr,cornia20cvpr,huang19iccv,pan20cvpr}, such as helping people with visual impairment to understand visual inputs, enhancing the accuracy of image search engines, or transferring images to text/audio in social media, etc.

Real-world applications rely on real-time feedback~(\eg, transferring image to audio for people with visual impairment, generating context caption of camera feed for robot). In such application scenarios, the responsiveness of NICG models is crucial. However, existing NICG techniques mainly focus on improving model accuracy or defending the adversarial accuracy-based attacks~\cite{gan17cvpr,vaswani17nips,vinyals15cvpr,wang20mm,chiaro20nips}. Whether the NICG model can maintain efficiency under adversarial pressure is still a blank domain.

In order to study the efficiency robustness of NICG models, the first thing we need to do is to figure out what factors will affect NICG model efficiency.
In this paper, we investigate a natural property of NICG models. 
The NICG model producing output tokens is a Markov Process; hence the number of underlying decoder calls is non-deterministic. Thus, the computational consumption of NICG models is naturally non-deterministic. 
This natural property discloses a potential vulnerability of NICG models. Adversaries may be able to design specific adversarial inputs to increase computational cost in NICG models significantly. 
Such efficiency vulnerability could lead to severe outcomes in real-world scenarios.
For example, efficiency-based attacks may cause a large magnitude of redundant computational resources and affect the user experience, such as increasing the device battery consumption or extending the response latency.
In this paper, we plan to investigate such potential vulnerability by answering the following questions:

\begin{center}
\begin{tcolorbox}[colback=gray!10,
                  colframe=black,
                  width=8.4cm,
                  arc=1mm, auto outer arc,
                  boxrule=0.9pt,
                 ]
\textit{Can we make unnoticeable modifications to image inputs to significantly increase the computational consumption of NICG models and degrade the model efficiency? If so, how severe the efficiency degradation can be?}
\end{tcolorbox}
\end{center}

Existing work on adversarial machine learning (ML)~\cite{goodfellow15iclr,dezfooli16cvpr,kurakin17iclr,carlini17oakland,jang17acsac,madry18iclr,chen18aaai,rony19cvpr} can not answer the aforementioned questions because of the following two reasons: \textit{(i)} existing adversarial attacks mainly focus on the classification DNN model, whose output is a deterministic numeric vector representing the likelihood for different categories. In contrast, our target model is the NICG model, whose output generation process is a non-deterministic Markov process, and the output is a sequence of numeric vectors.
Existing accuracy-based adversarial ML techniques can not handle the dependency in the Markov process.
Furthermore, \textit{(2)} the goal of efficiency robustness evaluation is to increase the computational cost to detect the possible computational resources leakage while existing accuracy-based work seeks to maximize the DNNs errors.
The natural difference between these two goals requires a totally new design of the optimization function for efficiency robustness evaluation.

In this paper, we propose a new methodology, \tool, to generate efficiency-oriented adversarial inputs for evaluating the NICG model efficiency robustness.
These adversarial inputs contain unnoticeable perturbations and consume more computation resources than original inputs in NICG models. To be specific, \tool will apply the minimal perturbation on the benign inputs that could minimize the likelihood of End Of Sentence~(EOS) token and delay the appearance of EOS accordingly.

\noindent\textbf{Evaluation.}
To evaluate the effectiveness of \tool, we perform \tool on four subject models with two datasets, Flickr8k~\cite{Flickr8k}, and MS-COCO~\cite{lin2014microsoft}. 
We compare \tool against six baseline techniques, including two accuracy-based attack algorithms and four natural image corruptions. 
To represent the efficiency degradation severity, we define I-Loops and I-Latency metrics to measure the increment of the decoder calls of the target models and CPU/GPU response latency caused by \tool and baselines. The evaluation results show that \tool has achieved performance far exceeding all baselines on all subjects, increasing the loop numbers, CPU/GPU latency of NICG model up to 483.86\%, 198.76\% and 290.40\% respectively. 

\noindent\textbf{Contribution.}
Our contributions are formalized as below:
\begin{itemize}
    \item We state a new vulnerability of NICG models. The computational consumption of NICG models is volatile for different inputs, thus the adversaries can decrease the efficiency of NICG models by increasing the computational resource consumption.
    \item   We propose a new methodology to evaluate the efficiency robustness of NICG models. To the best of our knowledge,  \tool is the first technique to measure the efficiency robustness for NICG models.
    \item We evaluate \tool on four subject models with two popular datasets and compare with six baselines. The evaluation results show that it's necessary to improve and protect the efficiency robustness of NICG models.
\end{itemize}

\section{Background}

\subsection{Neural Image Caption Generation Model}
\label{sec:background}

\begin{figure}[hb]
    \centering
    \includegraphics[width=0.5\textwidth]{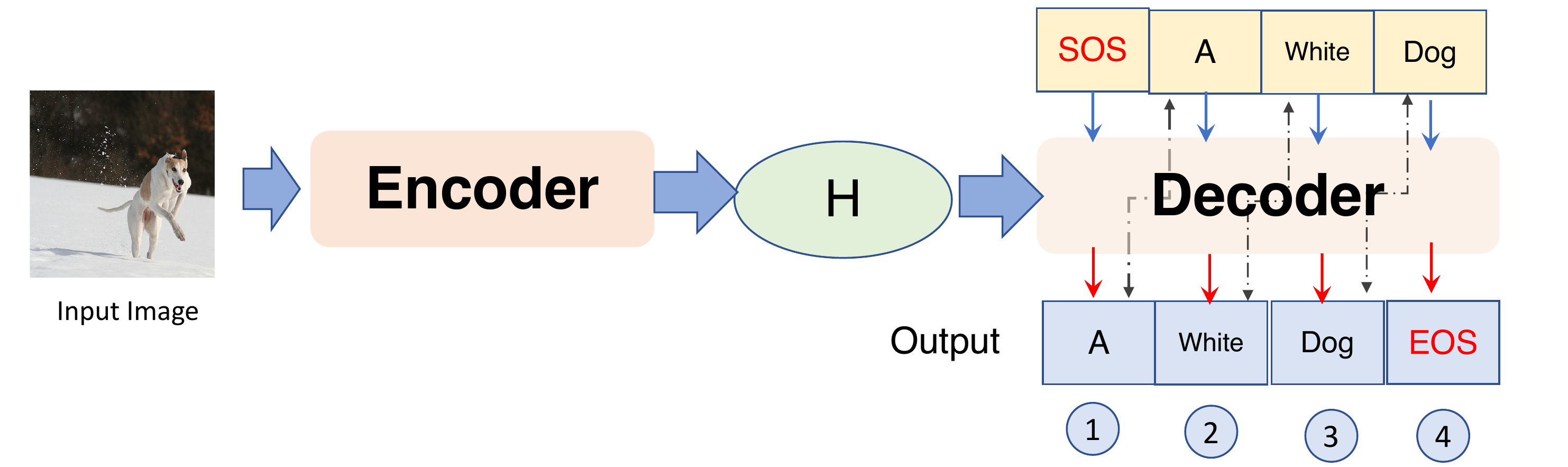}
    \caption{Working mechanism of neural image caption generator}
    \label{fig:background}
\end{figure}

Neural Image Caption Generation~(NICG)~\cite{anderson18cvpr,gan17cvpr,vaswani17nips,vinyals15cvpr,wang20mm,chiaro20nips,cornia20cvpr,huang19iccv,pan20cvpr} model calculates the conditional probability $P(Y | X)$, where $X$ is the input image and $Y = [y_1, y_2, \cdots, y_n]$ is the target token sequences that will be used as image captions. 
As shown in \figref{fig:background}, the input image is first sent through the encoder $\mathcal{F}_{encoder}$ to produce the hidden representation $H$.
After that, starting with a special token~(SOS), the decoder $\mathcal{F}_{decoder}$ uses $H$ in an iterative way for an auto-regressive generation of output tokens $Y$. 
The tokens are generated one by one until the process reaches the end of sequence (EOS) token or a pre-set maximum length. 
As the process is iterative, NICG models' computational resources consumption is proportional to the length of generated output sequence. Therefore, a longer output sequence would make the model less efficient.

\subsection{DNNs Efficiency}
The accuracy and complexity of DNN models are positively correlated. Excellent model accuracy often implies a large number of neural layers and complex model construction, followed by huge inference-time computational cost and low efficiency. To reduce DNNs inference-time cost and faster the inference processes for real-time applications, many related works have been proposed.
The related work can be divided into two types, The first type~\cite{howard17corr,zhang18cvpr} prunes DNN models offline by identifying and removing the unimportant/redundant neurons. The second type~\cite{wang18eccv,graves16corr,figurnov17cvpr} reduces the number of computations online by dynamically skipping the unnecessary part of DNNs, known as input-adaptive techniques.
Even though the input-adaptive techniques balance the model accuracy with computational costs, this balance is not robust. According to the recent studies~\cite{haque20cvpr,hong21iclr, haque2022ereba, haque2021nodeattack, chen2021transslowdown}, the input-adaptive DNN models are not robust against the adversarial attack, \ie, these techniques cannot lower computational costs under adversarial scenarios.

\subsection{Adversarial Attacks}
The adversarial example refers to an intentionally modified version of the benign example (\eg, adding perturbations). With the human-unnoticeable perturbations, the adversarial example could fool even the state-of-the-art DNNs~\cite{carlini17oakland, athalye18icml,szegedy14iclr}.
Normally, adversarial examples can be generated by performing perturbation that follows adversarial gradients~\cite{madry18iclr} or optimizes perturbation with given loss \cite{chen18aaai}.
The perturbation will be constrained by magnitude, among which $L_{2}$-Norm and $L_{inf}$-Norm are the most commonly used ones~\cite{carlini17oakland,madry18iclr}.
According to the difference of prior knowledge on the victim DNN model, the adversarial example generation techniques could be categorized into the white-box attack and black-box attack~\cite{goodfellow15iclr,dezfooli16cvpr,kurakin17iclr,carlini17oakland,jang17acsac,madry18iclr,chen18aaai,rony19cvpr}.

\section{Preliminary}

\begin{figure}[htbp!]
    \centering
    \includegraphics[width=0.48\textwidth]{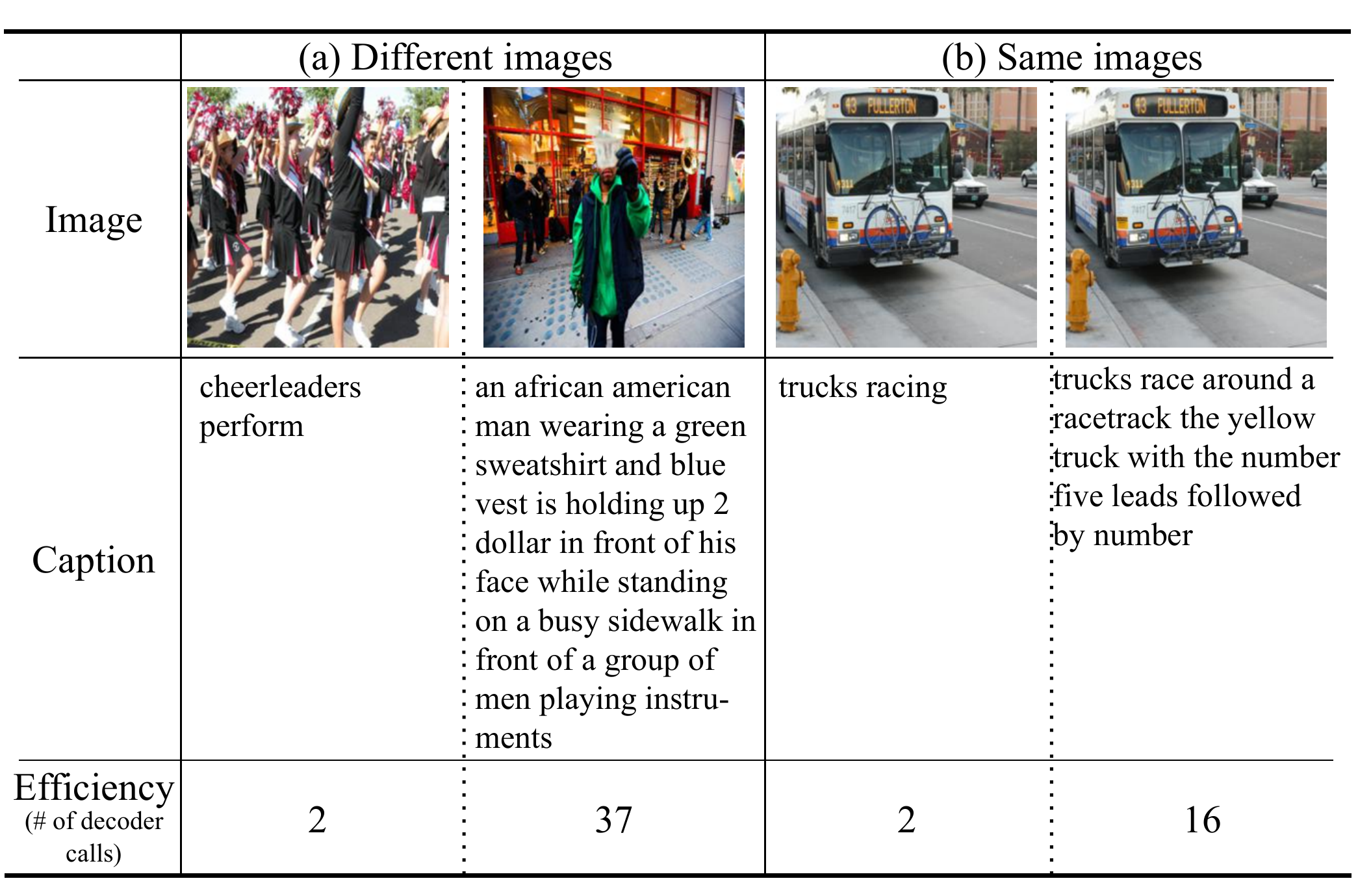}
    \caption{Efficiency uncertainty for images from MS-COCO}
    \label{fig:compare}
\end{figure}

\label{sec:preliminary}
As we discussed in \secref{sec:background}, NICG models will not terminate until the output token reaches EOS or a pre-set maximum length. 
In this section, we conduct a preliminary study to show that the value of the pre-set maximum length is hard to estimate because of the uncertainty in image caption tasks. 
Specifically, we select three images and corresponding captions from the MS-COCO dataset (shown in \figref{fig:compare}) to show the uncertainty.

\noindent\textbf{Variance across Different Images.}
For different images, the task complexity of analyzing their contents could be completely different. In the image caption task, different image semantics will significantly affect NICG model efficiency. For example, in \figref{fig:compare}~(a), due to the difference in the scene semantics, the corresponding caption lengths of these two images have a huge difference.

\noindent\textbf{Uncertainty in Labelling the Same Image.}
Another challenge to estimate max-length is the uncertainty from the training images. For example, in \figref{fig:compare}~(b), two different versions of the caption for the same training image have different lengths, which will increase the efficiency uncertainty in the NICG models trained with this image. 

Because of the significant variance and uncertainty mentioned above, estimating an exact maximum length for each image is challenging. Thus, a common practice is to set a pretty large value for all images to avoid incomplete captioning~(at least larger than the maximum caption length in the training dataset).

\begin{figure}[h]
    \centering
    \includegraphics[width=0.5\textwidth]{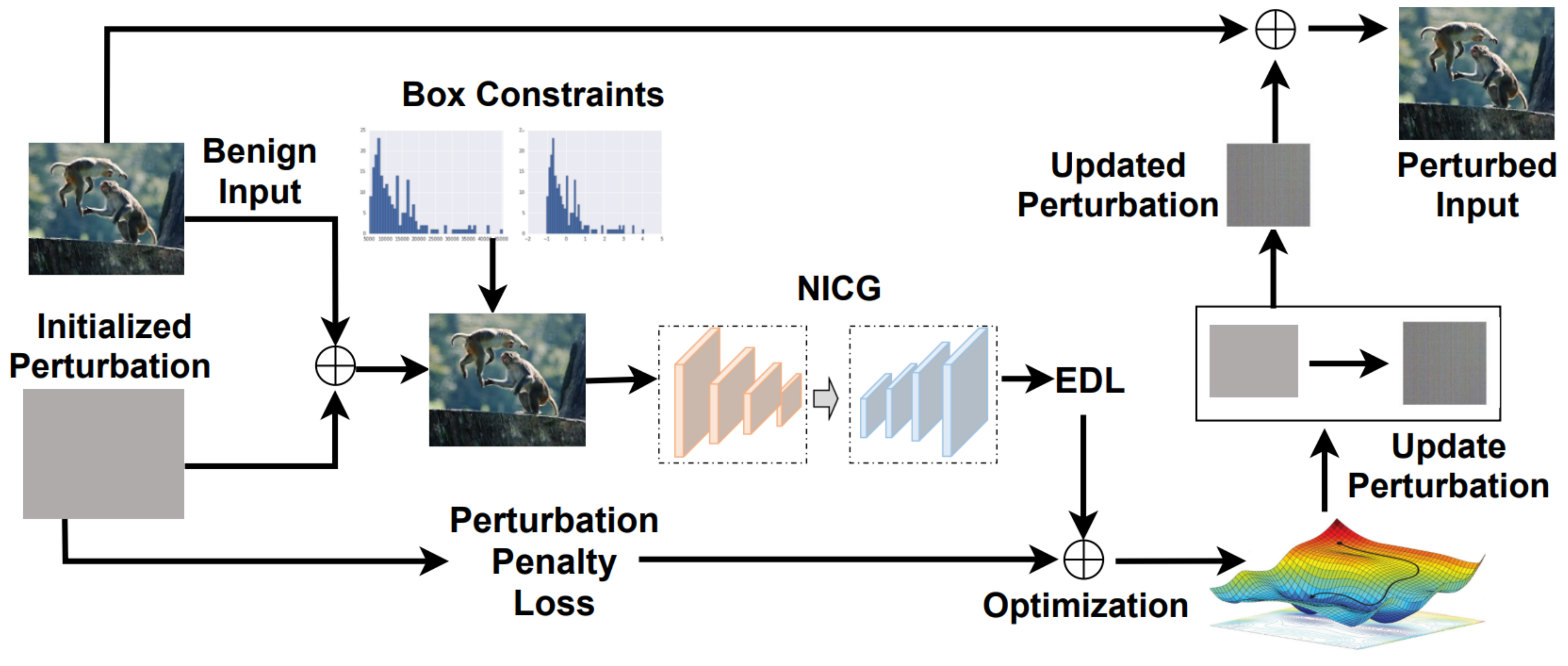}
    \caption{NICG workflow}
    \label{fig:nicg}
\end{figure}

\section{Approach}
\subsection{Problem Formulation}
\begin{equation}
\label{eq:define}
    \begin{split}
        & \quad \Delta = \text{argmax}_{\delta} \quad \text{Loop}_{\mathcal{F}}(x + \delta) \\ 
        & s.t. \quad ||\delta|| \le \epsilon \; \wedge \; ||x + \delta|| \in [0, 1]^n \\
    \end{split}
\end{equation}
Our objective is to generate human-unnoticeable perturbations to images to decrease the victim NICG model efficiency during inference. 
Specifically, our objective concentrates on three factors: \textit{(i)} the generated adversarial image should increase the victim NICG model computational complexity; 
\textit{(ii)} the generated adversarial image $x'$ can not be differentiated by humans from the benign image $x$; \textit{(iii)} the generated adversarial image $x'$ should be realistic in the real world.
We formulate the mentioned three factors in \equref{eq:define}. 
In \equref{eq:define}, $x$ is the benign input, $\mathcal{F}$ is the victim NICG model under attack, $\epsilon$ is the maximum adversarial perturbation allowed, and $\text{Loop}_{\mathcal{F}}(\cdot)$ measures the number of decoder calls in the victim NICG model $\mathcal{F}$.
Our proposed approach \tool tries to search for an optimal perturbation $\Delta$ that maximizes the number of decoder calls while holding the constraints that perturbation is smaller than the allowed threshold~(unnoticeable) and existing in the real world~(realistic).

\subsection{Attack Overview}

\figref{fig:nicg} shows the overview of our proposed attack.
Given a benign input image, \tool first initializes an adversarial perturbation satisfying the realistic box constraints~(\S \ref{sec:box_constraints}). After that, \tool computes the efficiency reduction loss~(\S \ref{sec:degradation_loss}) and the perturbation penalty loss~(\S \ref{sec:perturbation_loss}).
The reduction loss aims to slow down the victim NICG model, and the perturbation penalty loss seeks to enforce the generated adversarial examples to satisfy the unnoticeable constraints in  \equref{eq:define}.
Finally, \tool updates the adversarial perturbation by jointly optimizing the perturbation penalty loss and the efficiency reduction loss.

\subsection{Detail Design}

\subsubsection{Realistic Box Constraints}
\label{sec:box_constraints}
\begin{equation}
\label{eq:variable}
    \delta = \frac{1}{2}(tanh(w) + 1) - x
\end{equation}
To ensure the adversarial example is a valid image, we constraint the adversarial perturbation $\delta$ in \equref{eq:define}: $||x + \delta|| \in [0, 1]^n$.
Such constraints are known as box constraints in the optimization theory~\cite{carlini17oakland}.
To satisfy the constraints, instead of directly optimizing $\delta$, we introduces a new variable $w$ and apply a change-of-variables to optimize over $w$. The relationship between $w$ and $\delta$ is shown in \equref{eq:variable}. Because the range of function $tanh(\cdot)$ is $[-1, 1]$, $\delta + x$ will always satisfy the constraint  $||x + \delta|| \in [0, 1]^n$.

\subsubsection{Efficiency Reduction Loss}
\label{sec:degradation_loss}

\begin{figure}[h]
    \centering
    \includegraphics[width=0.99\linewidth]{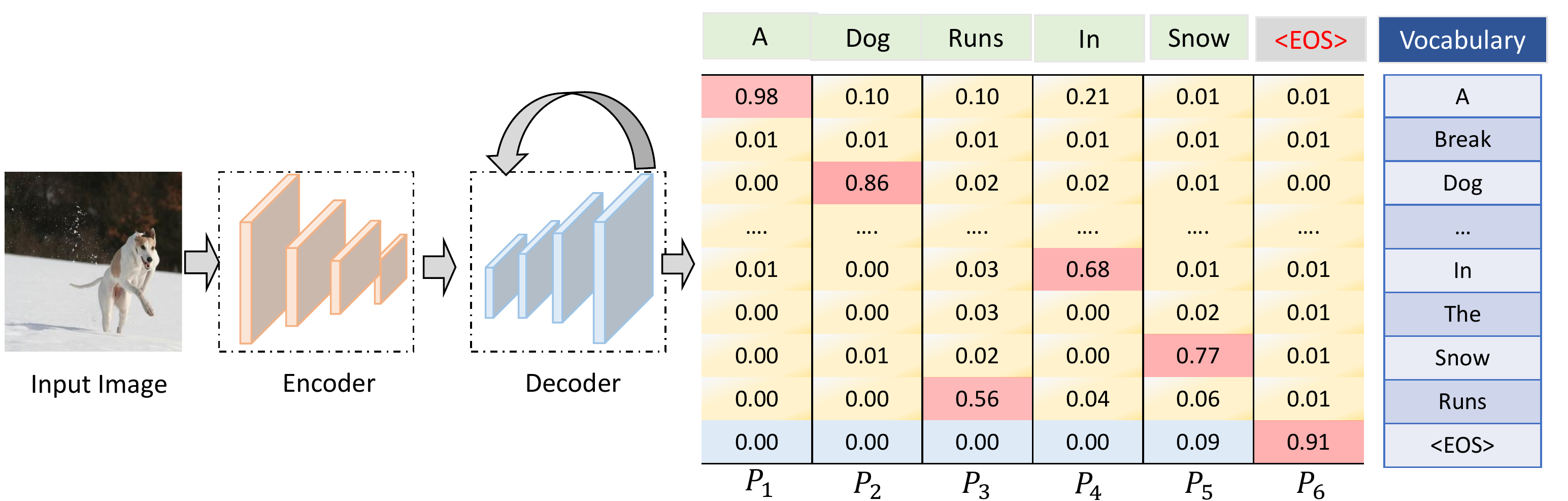}
    \caption{Distribution of output tokens}
    \label{fig:adv_loss}
\end{figure}

As we discussed in \S \secref{sec:background}, NICG model efficiency is related to the likelihood of the EOS tokens.
Thus, to degrade the efficiency of NICG model, our intuition is to decrease the EOS tokens likelihood.
Formally, let NICG model's output be a sequence of probability distributions, \ie  $\mathcal{F}(x) = [p_1, p_2,\cdots, p_n]$ and the output token sequences are $[o_1, o_2, \cdots, o_n]$, where $o_i =\text{argmax} (p_i)$.
Then we denote the likelihood of the output tokens and the EOS tokens as $[p_1^{o_1}, p_2^{o_2}, \cdots, p_n^{o_n}]$ and $[p_1^{eos}, p_2^{eos}, \cdots, p_n^{eos}]$ respectively. 
In the example of \figref{fig:adv_loss}, we have
\begin{equation*}
    \begin{split}
        [p_1^{o_1}, p_2^{o_2}, \cdots, p_n^{o_n}] = [0.98, 0.86, 0.56, 0.68, 0.77, 0.91] \\
        [p_1^{eos}, p_2^{eos}, \cdots, p_n^{eos}] =  [0.00, 0.00, 0.00, 0.00, 0.09, 0.91] \\
    \end{split}
\end{equation*}
Then our efficiency reduction objective can be divided into two parts: \textit{(i)} Delay EOS appearance and \textit{(ii)} Break output dependency.

\noindent\textbf{Minimize EOS Probability.} To delay the appearance of EOS tokens, existing work usually applies minimum likelihood estimation~(MLE) to minimize the likelihood of EOS tokens.
However, as the NICG model vocabulary is normally pretty large~(more than 1,000), MLE becomes inefficient because MLE requires to compute the cross-entropy loss, which is inefficient on large vocabulary.
To address the limitation of inefficiency cross-entropy on large vocabulary, we borrow the idea of noise contrastive estimation~(NCE)~\cite{gutmann2010noise} and design our loss function.
Specifically, we treat the probability distribution $p_i$ for the multi-classification task as a binary classification task \ie,  is or not an EOS token.
We then define a new probability distribution $q_i = [l_i^{eos}, \sum_j l_i^j - l_i^{eos}]$ to represent the logits distribution of the proposed binary classification task.
Finally, our goal to delay the appearance of the EOS token can be formulated as \equref{eq:eos_loss}.
\begin{equation}
\label{eq:eos_loss}
    \mathcal{L}_{eos} = \frac{1}{n}\sum_{i=1}^{n} \left\{  l_i^{eos} - \mathbb{E}_{k\sim p_i} l_i^{k}
\right\}
\end{equation}
With the help of logits $l_i^j$, we do not need to compute the softmax on large vocabulary thus could compute the objective function more efficiency.
Next, we prove that our objective function \equref{eq:eos_loss} will convergence to  MLE method's loss function \ie, $L = \frac{1}{n}\sum_{i=1}^{n} \text{log}p_i^{eos}$ .

\begin{lemma} The proposed loss function $\mathcal{L}_{eos}$ will finally convergence to the MLE method's objective function $L= \frac{1}{n}\sum_{i=1}^{n} \text{log}p_i^{eos}$. 

\end{lemma}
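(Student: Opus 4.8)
The plan is to rewrite $\mathcal{L}_{eos}$ \emph{exactly} as the MLE objective $L$ plus a nonnegative entropy term, and then to argue that this extra term vanishes in the regime toward which the attack optimization drives the model, so that the two objectives coincide in the limit.

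First I would pass from probabilities to logits. Writing $p_i=\mathrm{softmax}(l_i)$, so that $p_i^{eos}=e^{l_i^{eos}}/\sum_k e^{l_i^k}$, we get $\log p_i^{eos}=l_i^{eos}-\log\sum_k e^{l_i^k}$ and hence
\begin{equation*}
L=\frac{1}{n}\sum_{i=1}^{n}\Bigl(l_i^{eos}-\log\textstyle\sum_k e^{l_i^k}\Bigr).
\end{equation*}
Second, I would invoke the variational identity for the log-partition function (equivalently Gibbs' inequality, or a one-line Jensen argument): for every logit vector $l_i$,
\begin{equation*}
\log\sum_k e^{l_i^k}=\sum_k p_i^k\, l_i^k+H(p_i)=\mathbb{E}_{k\sim p_i}\bigl[l_i^k\bigr]+H(p_i),
\end{equation*}
where $H(p_i)=-\sum_k p_i^k\log p_i^k\in[0,\log V]$ and $V$ is the vocabulary size; the quickest check is $\sum_k p_i^k\bigl(l_i^k-\log p_i^k\bigr)=\sum_k p_i^k\log\bigl(\sum_j e^{l_i^j}\bigr)=\log\sum_j e^{l_i^j}$. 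Substituting this into the definition \equref{eq:eos_loss} yields the exact decomposition
\begin{equation*}
\mathcal{L}_{eos}=\frac{1}{n}\sum_{i=1}^{n}\Bigl(l_i^{eos}-\mathbb{E}_{k\sim p_i}\bigl[l_i^k\bigr]\Bigr)=L+\frac{1}{n}\sum_{i=1}^{n}H(p_i).
\end{equation*}

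Third, I would control the residual $R:=\tfrac{1}{n}\sum_{i}H(p_i)\ge 0$. Each gradient step on $\mathcal{L}_{eos}$ suppresses $l_i^{eos}$ relative to the dominant surviving token and therefore sharpens $p_i$ toward a one-hot distribution, so $H(p_i)\to 0$, hence $R\to 0$ and $\mathcal{L}_{eos}\to L$, which is the claim. The main obstacle is precisely this last step: the decomposition above is elementary and exact, but $R\to 0$ is not forced by the definitions alone — it reflects the empirical fact that (attacked) NICG decoders output near-deterministic token distributions rather than a purely formal consequence. I would therefore either state the lemma in that confident-prediction limit and point to this observed behaviour, or, for a statement independent of $R$, record the complementary gradient-level fact: treating the weights $p_i$ inside $\mathbb{E}_{k\sim p_i}[\cdot]$ as a stop-gradient constant gives $\nabla_{l}\mathcal{L}_{eos}=\nabla_{l}L$ identically, so $\mathcal{L}_{eos}$ and $L$ induce exactly the same logit updates regardless of the residual. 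Either route closes the argument; I expect the gradient-level version to be the cleaner one to make fully rigorous, while the entropy decomposition is the more transparent way to \emph{see} why the two losses agree.
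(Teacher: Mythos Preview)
Your proposal is correct and in fact subsumes the paper's proof. The paper takes precisely your ``gradient-level'' route: it writes $L=\tfrac{1}{n}\sum_i\bigl(l_i^{eos}-\log\sum_k e^{l_i^k}\bigr)$, differentiates with respect to the input $x$, rewrites the vocabulary sum as $\mathbb{E}_{k\sim p_i}\bigl[\partial l_i^k/\partial x\bigr]$, and identifies the result with $\partial\mathcal{L}_{eos}/\partial x$ (implicitly under the stop-gradient convention on the sampling weights $p_i$ that you make explicit); it then appeals to Monte Carlo convergence for the sampled estimate of the expectation. The paper does \emph{not} give your entropy decomposition $\mathcal{L}_{eos}=L+\tfrac{1}{n}\sum_i H(p_i)$.

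What the two routes buy: the gradient argument is exactly what the attack algorithm needs, since only gradients are used to update $w$; both you and the paper arrive there, though you are clearer about the stop-gradient assumption that makes $\partial\mathcal{L}_{eos}/\partial x=\partial L/\partial x$ hold identically. Your entropy decomposition is additional and more informative: it pins down the exact nonnegative gap between the two objectives, shows $\mathcal{L}_{eos}\ge L$ always, and makes transparent which extra hypothesis (near-one-hot decoder distributions) is required for function-level, as opposed to gradient-level, agreement. You are also more careful than the paper in flagging that $R\to 0$ is an empirical rather than a formal consequence.
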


\begin{proof} Denote the logits on the $i^{th}$ token as $[l_{i}^{1}, l_{i}^{2}, \cdots, l_{i}^{V}]$, where $V$ is the size of the vocabulary, then we have  
$$p_i^j = \frac{\text{exp}(l_{i}^{j})}{\sum_{k=1}^{V} \text{exp}(l_{i}^{k})}$$
MSE seeks to minimize the likelihood of EOS, then the objective function is
$$L = \frac{1}{n}\sum_{i=1}^{n} \text{log}p_i^{eos} = \frac{1}{n}\sum_{i=1}^{n} \{l_i^{eos} - \text{log} \sum_{k=1}^{V} \text{exp}(l_{i}^{k})\}$$
the gradients of the above objective is 
$$\frac{\partial L}{\partial x} = \frac{1}{n}\sum_{i=1}^{n} \left\{ \frac{\partial l_i^{eos}}{\partial x} -   \frac{\text{exp}(l_i^{k})}{\sum_{k=1}^{V} \text{exp}(l_{i}^{k})} \sum_{k=1}^{V}\frac{\partial l_i^{k}}{\partial x}
\right\}$$
Notice that $p_i^j = \frac{\text{exp}(l_{i}^{j})}{\sum_{k=1}^{V} \text{exp}(l_{i}^{k})}$, then we have 

\begin{equation}
    \begin{split}
        & \frac{\partial L}{\partial x} = \frac{1}{n}\sum_{i=1}^{n} \left\{ \frac{\partial l_i^{eos}}{\partial x} - \sum_{k=1}^{V} p_i^{k} \frac{\partial l_i^{k}}{\partial x} \right\}  \\
& \quad\quad = \frac{1}{n}\sum_{i=1}^{n} \left\{ \frac{\partial l_i^{eos}}{\partial x} - \mathbb{E}_{k\sim p_i}\frac{\partial l_i^{k}}{\partial x}
\right\} \\
&\quad\quad = \frac{\partial\mathcal{L}_{eos}}{\partial x}
    \end{split}
\end{equation}
\end{proof}

Because of the convergence of Monte Carlo method, we prove Lemma 1.

\noindent\textbf{Break output Dependency.} Because the token generation process of NICG models is a 
Markov process, \ie , NICG model outputs the probability distribution $p_i$ based on the previous output token $o_{i - 1}$, \ie, $p_i =\mathcal{F}_{decoder}(o_{i-1}, h)$. 
Minimize $p_i^{eos}$ may not change the output tokens at the positions from 0 to $n-1$. Thus minimizing $p_n^{eos}$ will be challenging because the previous token $o_{n-1}$ keeps the same.
To accelerate the process of delaying EOS tokens, we seek to break such output dependency.
Similar to the objective in delaying EOS appearance, we have the objective in \equref{eq:o_loss}.

\begin{equation}
\label{eq:o_loss}
    \mathcal{L}_{dep} = \frac{1}{n}\sum_{i=1}^{n} \left\{  l_i^{o_i} - \mathbb{E}_{k\sim p_i} l_i^{k}
\right\}
\end{equation}

\noindent\textbf{Final Efficiency Reduction Objective.}
Our final efficiency reduction objective can be formulated as \equref{eq:adv_loss}, which aims to delay the EOS token appearance and break the output dependency.

\begin{equation}
\label{eq:adv_loss}
    \mathcal{L}_{deg} = \mathcal{L}_{eos} + \lambda \mathcal{L}_{dep}
\end{equation}

\subsubsection{Perturbation Penalty Loss}
\label{sec:perturbation_loss}

\begin{equation}
\label{eq:per_loss}
    \mathcal{L}_{per} = \left\{
    \begin{aligned}
        &\quad 0 ;   \quad\quad\quad  \text{if} \; \delta \le \epsilon \\
        & ||\delta - \epsilon||; \quad \text{otherwise}
    \end{aligned}
     \right.
\end{equation}
To ensure that the adversarial example will be unnoticeable to humans, we constraint the magnitude of the adversarial perturbation in \equref{eq:define}, \ie,  $||\delta|| \le \epsilon$. 
To achieve such goal, we introduce the perturbation penalty loss in \equref{eq:per_loss}, if the adversarial perturbation $\delta$ is less than the allowed perturbation magnitude, the penalty is zero, otherwise, the penalty will increase linearly as $||\delta - \epsilon||$ increases.

\subsection{Attack Algorithm}

\begin{algorithm}[btp]
\caption{\tool Attack} 
\label{alg:attack}

 {\bf Input:} Benign input $x$ \\
 {\bf Input:} Victim NICG model $\mathcal{F}(\cdot)$ \\
 {\bf Input:} Maximum perturbation $\epsilon$  \\
{\bf Input:} Maximum Iterations T  \\
 {\bf Output:} Adversarial examples $x'$ that satisfy \equref{eq:define}\\
\begin{algorithmic}[1]
\STATE $\delta \Leftarrow 0$ \xspace\xspace\xspace\qquad\qquad\qquad Initialize $\delta$ with 0.
\STATE $w \Leftarrow arctanh(2x - 1)$ Initialize $w$ based on \equref{eq:variable}.

\FOR{$iter \; \text{in} \; \text{Range}$(T)}
        \STATE$x'= \frac{1}{2}(tanh(w) + 1)$  Compute $x'$ based on \equref{eq:variable}
        \STATE $\delta = x' - x$ \quad Compute the perturbation magnitude
        \STATE $\mathcal{L}_{deg}=L_1(x', \mathcal{F})$ Compute $\mathcal{L}_{per}$ according to \equref{eq:adv_loss}.
        \STATE $\mathcal{L}_{per} = L_2(\delta, \epsilon)$ Compute $\mathcal{L}_{per}$ according to \equref{eq:per_loss}.
        \STATE $\mathcal{L}_{total} = \mathcal{L}_{deg} + \lambda\mathcal{L}_{per}$ Compute joint loss.
        \STATE $\triangledown = \frac{\partial \mathcal{L}_{total}}{\partial w}$ Compute the gradients
        \STATE $w = w + lr \times \triangledown$ Update the latent variable $w$.
\ENDFOR
\STATE Return $\frac{1}{2}(tanh(w) + 1)$ Return the adversarial example.

\end{algorithmic}
\end{algorithm}

The attack algorithm is shown in Algorithm \ref{alg:attack}. Our attack algorithm accepts four inputs: a benign input image $x$, the victim NICG model $\mathcal{F}$, a pre-defined perturbation threshold $\epsilon$, and the maximum iteration number T.
Our algorithm outputs an adversarial example $x'$ that satisfy \equref{eq:define}.
Our algorithm first initializes the adversarial perturbation $\delta$ as zero and compute the corresponding $w$~(line 1 and 2).
After that, we iteratively update the latent variable $w$. Specifically, we compute the efficiency reduction loss $\mathcal{L}_{deg}$ based on \equref{eq:adv_loss} and the perturbation penalty loss based on \equref{eq:per_loss}.
We then optimize $w$ by minimizing the joint losses.
After iteration, we transform the latent variable $w$ back to image space and return the adversarial example.

\section{Evaluation}

\subsection{Experimental Setup}


\begin{table}[htbp]
  \centering
  \caption{Experimental Subjects}
  \resizebox{.49\textwidth}{!}{
\begin{tabular}{c|c|cc|c|c|c}
\toprule[1.2pt]
\multirow{2}{*}{Dataset}  & \multirow{2}{*}{Subject} & \multicolumn{2}{c|}{Model}                                               & \multirow{2}{*}{Train} & \multirow{2}{*}{Valid} & \multirow{2}{*}{Test} \\
                          &                          & Encoder    & Decoder                                                     &                        &                        &                       \\ \hline
\multirow{2}{*}{Flickr8k} & A                        & ResNext    & \begin{tabular}[c]{@{}c@{}}Attention\\  + LSTM\end{tabular} & 6000                   & 1000                   & 1000                  \\ \cline{2-7} 
                          & B                        & GoogLeNet  & \begin{tabular}[c]{@{}c@{}}Attention\\  + RNN\end{tabular}  & 6000                   & 1000                   & 1000                  \\ \hline
\multirow{2}{*}{MS-COCO}   & C                        & MobileNets & \begin{tabular}[c]{@{}c@{}}Attention\\  + LSTM\end{tabular} & 82783                  & 40504                  & 40775                 \\ \cline{2-7} 
                          & D                        & ResNet     & \begin{tabular}[c]{@{}c@{}}Attention\\  + RNN\end{tabular}  & 82783                  & 40504                  & 40775                 \\ 
                          \bottomrule[1.2pt]
\end{tabular}
}
\label{tab:data}
\end{table}
\noindent\textbf{Models and Datasets.}
We evaluate our proposed technique~\footnote{Our code is available at \href{https://github.com/SeekingDream/NICGSlowDown}{https://github.com/NICGSlowDown}} on two public datasets, Flickr8k~\cite{Flickr8k}, and MS-COCO~\cite{lin2014microsoft}.
\tabref{tab:data} shows the detail of NICG models for each corresponding dataset.
Flickr8k dataset contains 8,000 images~(including 6,000 training images, 1,000 validation images and 1,000 test images).
We apply two encoder-decoder models for the Flickr8k dataset. The first one applies ResNext~\cite{resnext17cvpr} as encoder and LSTM module as decoder~\cite{lstm97}. The second one applies GoogLeNet~\cite{googlenet15cvpr} as encoder and RNN as decoder~\cite{rnn86nature}.
MS-COCO dataset contains
123,287 images~(including 82,783 training images, 40,504 validation images and 40,775 testing images).
We also apply two encoder-decoder models for the MS-COCO dataset. The first one is MobileNets~\cite{howard17corr} + LSTM and the latter one is ResNet~\cite{resnet16cvpr} + RNN.

\begin{equation}
\label{eq:metric}
    \begin{split}
        &\quad \quad  \text{I-Loop} = \frac{\text{Loop}(x') - \text{Loop}(x)}{\text{Loop}(x)} \times 100\% \\
        & \text{I-Latency} = \frac{\text{Latency}(x') - \text{Latency}(x)}{\text{Latency}(x)} \times 100\% \\
    \end{split}
\end{equation}

\noindent\textbf{Metrics.} We select two metrics, the number of decoder calls and response latency, to represent the efficiency of NICG models.
As we discussed in \S \ref{sec:background}, higher decoder calls indicate that the NICG model cast more floating-point operations~(FLOPs) to handle the input image, which leads to less efficiency~\cite{zhang18cvpr,wang18eccv}.
Response latency is a hardware-dependent metric used to measure NICG model runtime efficiency. High response latency indicates worse real-time caption quality and higher battery consumption. We measure the response latency on two hardware platforms: Intel Xeon E5-2660v3 CPU and Nvidia1080Ti GPU.
Specifically, we define two metrics, I-Loop and I-Latency, to show the effectiveness of \tool in degrading the NICG model efficiency.
The formal definition of I-Loop and I-Latency are shown in \eqref{eq:metric},  where $x$ and $x'$ denotes the benign example and the generated adversarial example respectively, Loop($\cdot$) and Latency($\cdot$) are the functions to calculate the decoder calls and response latency respectively. Higher I-Loop and I-Latency refer to more severe efficiency slowdown caused by the adversarial example.

\noindent\textbf{Comparison Baselines.}
To the best of our knowledge, we are the first to study the efficiency robustness of NICG models; therefore, no existing efficiency attacks can be applied as our baselines.
To show that existing accuracy-based methods can not be applied to evaluate the NICG model's efficiency robustness, we compare \tool against two accuracy-based attack algorithms and four natural image corruptions.
Specifically, we choose PGD~\cite{madry2018towards} and  CW~\cite{carlini17oakland}) as the accuracy-based attack algorithms and image quantization~\cite{xu2017feature, hendrycks2019benchmarking}, Gaussian noise~\cite{xu2017feature,hendrycks2019benchmarking}, JPEG compression~\cite{liu2019feature} and feature squeezing~\cite{xu2017feature} as the corruptions.

\noindent\textbf{Implementation Details. }
We follow \cite{xu2015show} to implement the four neural image caption generation models.
We set the NICG model's maximum caption length as 60 as the maximum caption length in the training dataset is 53.
We filter out the tokens with frequencies less than 5. 
Finally, our vocabulary sizes are 2,633 and 11,569 for Flick8k and MS-COCO datasets.
We implement \tool with Pytorch and set the maximum perturbation $\epsilon$ as 40 and 0.03 for $L_{2}$ and $L_{inf}$ adversarial examples.
We set maximum iteration $T$ as 1,000 and the hyper-parameter $\lambda$ as $1.0\times10^{4}$.

\subsection{Effectiveness and Severity}

\noindent\textbf{Effectiveness of Attack.} \figref{fig:distribution} shows the distribution of efficiency metrics for Subject A~(more results are shown in Appendix). 
The first and second rows represent the Probability Density Function~(PDF) and Cumulative Distribution Function~(CDF) results. For convenience, we reverse the CDF from one to zero.
The area under the CDF curve indicates the efficiency of the NICG model, and a larger area indicates the NICG model is less efficient.
The green area denotes the distribution of benign examples,  and the red represents the distribution of adversarial examples generated by \tool.
From \figref{fig:distribution}, we could observe that adversarial examples significantly change the number of decoder calls and latency distribution in the NICG model.
This observation indicates that our attack could effectively slow down the NICG model.

\begin{figure}[h]
    \centering
    \includegraphics[width=0.5\textwidth]{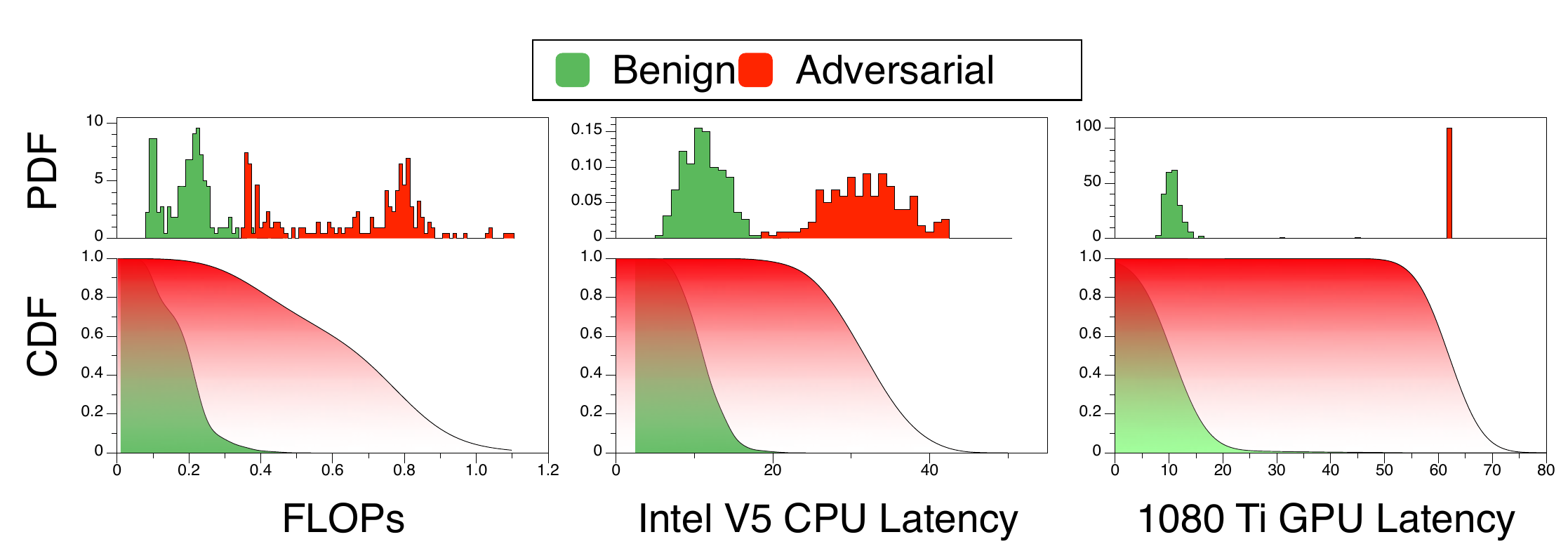}
    \caption{Efficiency distribution of benign and adversarial examples (More results can be found in appendix)}
    \label{fig:distribution}
\end{figure}

\begin{table*}[htbp]
  \centering
  \caption{Resutls of efficiency reduction}
  \resizebox{0.78\textwidth}{!}{
    \begin{tabular}{c|c|c|rrrrrr|r}
    \toprule
    \multicolumn{1}{l|}{Subject} & Norm  & Metric & \multicolumn{1}{c}{PGD} & \multicolumn{1}{c}{CW} & \multicolumn{1}{c}{Quantize} & \multicolumn{1}{c}{Gaussian} & \multicolumn{1}{c}{JPEG} & \multicolumn{1}{c|}{TVM} & \multicolumn{1}{c}{Ours} \\
    \hline
    \multirow{6}[4]{*}{A} &       & I-Loop & 87.30  & 7.26  & 3.97  & 0.47  & -5.25  & -0.12  & \textbf{483.86 } \\
          & L2    & I-Latency(CPU) & 36.33  & 2.59  & 3.00  & 1.44  & -3.28  & -1.51  & \textbf{198.76 } \\
          &       & I-Latency(GPU) & 75.47  & 18.93  & 11.81  & 13.35  & 8.47  & 15.60  & \textbf{290.40 } \\
\cline{2-10}          &       & I-Loop & 7.42  & 44.88  & 6.37  & 0.79  & -5.25  & -0.12  & \textbf{354.11 } \\
          & Linf  & I-Latency(CPU) & 3.86  & 16.68  & 2.43  & 1.82  & -15.45  & 9.06  & \textbf{202.81 } \\
          &       & I-Latency(GPU) & 34.62  & 38.18  & 18.96  & 10.14  & 14.75  & 21.64  & \textbf{241.90 } \\
    \hline
    \multirow{6}[4]{*}{B} &       & I-Loop & 18.62  & 6.66  & 11.94  & 5.40  & 2.14  & 0.45  & \textbf{481.32 } \\
          & L2    & I-Latency(CPU) & 2.77  & 3.73  & 24.65  & -14.78  & 1.65  & 5.80  & \textbf{87.37 } \\
          &       & I-Latency(GPU) & 26.16  & 15.31  & 14.55  & 1.65  & 2.47  & 6.12  & \textbf{223.38 } \\
\cline{2-10}          &       & I-Loop & 8.22  & 36.74  & 10.33  & 0.04  & 2.14  & 0.45  & \textbf{271.19 } \\
          & Linf  & I-Latency(CPU) & 1.86  & 6.32  & -0.74  & 3.28  & 2.55  & 2.53  & \textbf{8.32 } \\
          &       & I-Latency(GPU) & 6.94  & 20.99  & 22.55  & 12.56  & 3.20  & 14.11  & \textbf{71.77 } \\
    \hline
    \multirow{6}[4]{*}{C} &       & I-Loop & 48.48  & 2.76  & -0.08  & 7.96  & -5.69  & -0.96  & \textbf{433.58 } \\
          & L2    & I-Latency(CPU) & 33.07  & 3.94  & -0.17  & 3.32  & -2.19  & -11.01  & \textbf{155.61 } \\
          &       & I-Latency(GPU) & 32.14  & 14.51  & 13.75  & 19.19  & 2.34  & 9.98  & \textbf{297.37 } \\
\cline{2-10}          &       & I-Loop & -5.97  & 62.89  & 3.49  & -2.32  & -5.69  & -0.96  & \textbf{379.81 } \\
          & Linf  & I-Latency(CPU) & -9.33  & 20.48  & 1.54  & -1.21  & -3.60  & 8.19  & \textbf{90.73 } \\
          &       & I-Latency(GPU) & 6.23  & 29.24  & 16.97  & 13.08  & 20.19  & 21.06  & \textbf{211.41 } \\
    \hline
    \multirow{6}[4]{*}{D} &       & I-Loop & 19.07  & 11.17  & -7.33  & 0.24  & -3.53  & 0.17  & \textbf{408.90 } \\
          & L2    & I-Latency(CPU) & 8.14  & 7.07  & -4.62  & -1.97  & 3.26  & -2.45  & \textbf{155.49 } \\
          &       & I-Latency(GPU) & 31.95  & 17.32  & 8.08  & 31.16  & 21.59  & 15.87  & \textbf{192.58 } \\
\cline{2-10}          &       & I-Loop & 7.82  & 74.09  & -8.35  & 1.51  & -3.53  & 0.17  & \textbf{115.02 } \\
          & Linf  & I-Latency(CPU) & -1.13  & 29.04  & -4.17  & 2.53  & -3.53  & -3.16  & \textbf{21.45 } \\
          &       & I-Latency(GPU) & 23.29  & 43.41  & 10.27  & 3.25  & 3.85  & 7.63  & \textbf{55.36 } \\
    \bottomrule
    \end{tabular}%
    }
  \label{tab:mainres}%
\end{table*}%

\noindent\textbf{Impact of Attack.} To evaluate the severity of our proposed attack on reducing the model efficiency, we measure the I-LOOP and I-Latency for the four subjects we mentioned above. \tabref{tab:mainres} shows the results of the adversarial attack on the targeted model. From the table, we could have the following observations: \textit{(i)} Compared to other baselines, \tool achieves the best performance on slowing down the targeted NICG model in all subjects. For example, adversarial examples generated by \tool increase the number of decoder calls, CPU latency, and GPU latency on Subject A up to 483.86\%, 198.76\%, 290.40\% respectively; \textit{(ii)} Unlike \tool, all baseline methods can not ensure degrading efficiency of the NICG model. In some cases, baselines would even speed up the NICG model processing instead. This observation proves that the existing baseline techniques are not suitable for evaluating the efficiency robustness of NICG models; \textit{(iii)} For all subjects, \tool with L2-Norm achieves better performance compared with Linf-Norm. We infer that because the perturbation size of L2-Norm is more suitable for \tool to apply efficiency attack; \textit{(iv)}  For all subjects, the GPU latency increased by \tool is more effective than the CPU delay, implying that \tool is more effective for efficiency attacks on GPU than CPU.

\subsection{Quality of Generated Images}



\begin{table}[htbp]
  \centering
  \caption{The size of the adversarial perturbations}
  \resizebox{0.48\textwidth}{!}{
        \begin{tabular}{c|c|rrrr|r}
    \toprule
    \textbf{Norm} & \multicolumn{1}{l|}{\textbf{Approach}} & \multicolumn{1}{c}{\textbf{A}} & \multicolumn{1}{c}{\textbf{B}} & \multicolumn{1}{c}{\textbf{C}} & \multicolumn{1}{c|}{\textbf{D}} & \multicolumn{1}{c}{\textbf{Avg}} \\
    \midrule
    \multirow{7}[2]{*}{$L_{2}$} & \textbf{PGD} & 39.98  & 39.98  & 39.98  & 39.98  & 39.98  \\
          & \textbf{CW} & 0.04  & 0.04  & 0.04  & 0.04  & 0.04  \\
          & \textbf{Quantize} & 160.19  & 160.22  & 161.76  & 161.78  & 160.99  \\
          & \textbf{Gaussian} & 38.25  & 38.25  & 38.08  & 38.08  & 38.16  \\
          & \textbf{JPEG} & 160.85  & 160.85  & 161.06  & 161.06  & 160.96  \\
          & \textbf{TVM} & 0.52  & 0.52  & 0.51  & 0.51  & 0.51  \\
          & \textbf{Ours} & 4.25  & 4.30  & 4.82  & 5.18  & 4.64  \\
    \midrule
    \multirow{7}[2]{*}{$L_{inf}$} & \textbf{PGD} & 0.03  & 0.03  & 0.03  & 0.03  & 0.03  \\
          & \textbf{CW} & 0.04  & 0.04  & 0.04  & 0.04  & 0.04  \\
          & \textbf{Quantize} & 0.98  & 0.98  & 0.99  & 0.99  & 0.98  \\
          & \textbf{Gaussian} & 0.03  & 0.03  & 0.03  & 0.03  & 0.03  \\
          & \textbf{JPEG} & 0.92  & 0.92  & 0.93  & 0.93  & 0.93  \\
          & \textbf{TVM} & 0.00  & 0.00  & 0.00  & 0.00  & 0.00  \\
          & \textbf{Ours} & 0.04  & 0.04  & 0.04  & 0.02  & 0.04  \\
    \bottomrule
    \end{tabular}%
}
  \label{tab:per}%
\end{table}%

\subsubsection{Quantitative Evaluation}

In this section, we measure the sizes of the generated adversarial examples.
The results are shown in \tabref{tab:per}.
The results show that \tool generates adversarial examples with minimal perturbation sizes. Specifically, \tool generates adversarial examples with the average perturbation size 4.64 for $L_2$ norm and 0.04 for $L_{inf}$ norm.
The results imply \tool generates adversarial examples that are unnoticeable to humans.
Some baselines also generate adversarial examples with imperceptible perturbations, but they cannot affect the NICG model efficiency as expected, making the ``unnoticeable" meaningless.

\subsubsection{Qualitative Evaluation}
\begin{figure}[hbp!]
    \centering
    \includegraphics[width=0.49\textwidth]{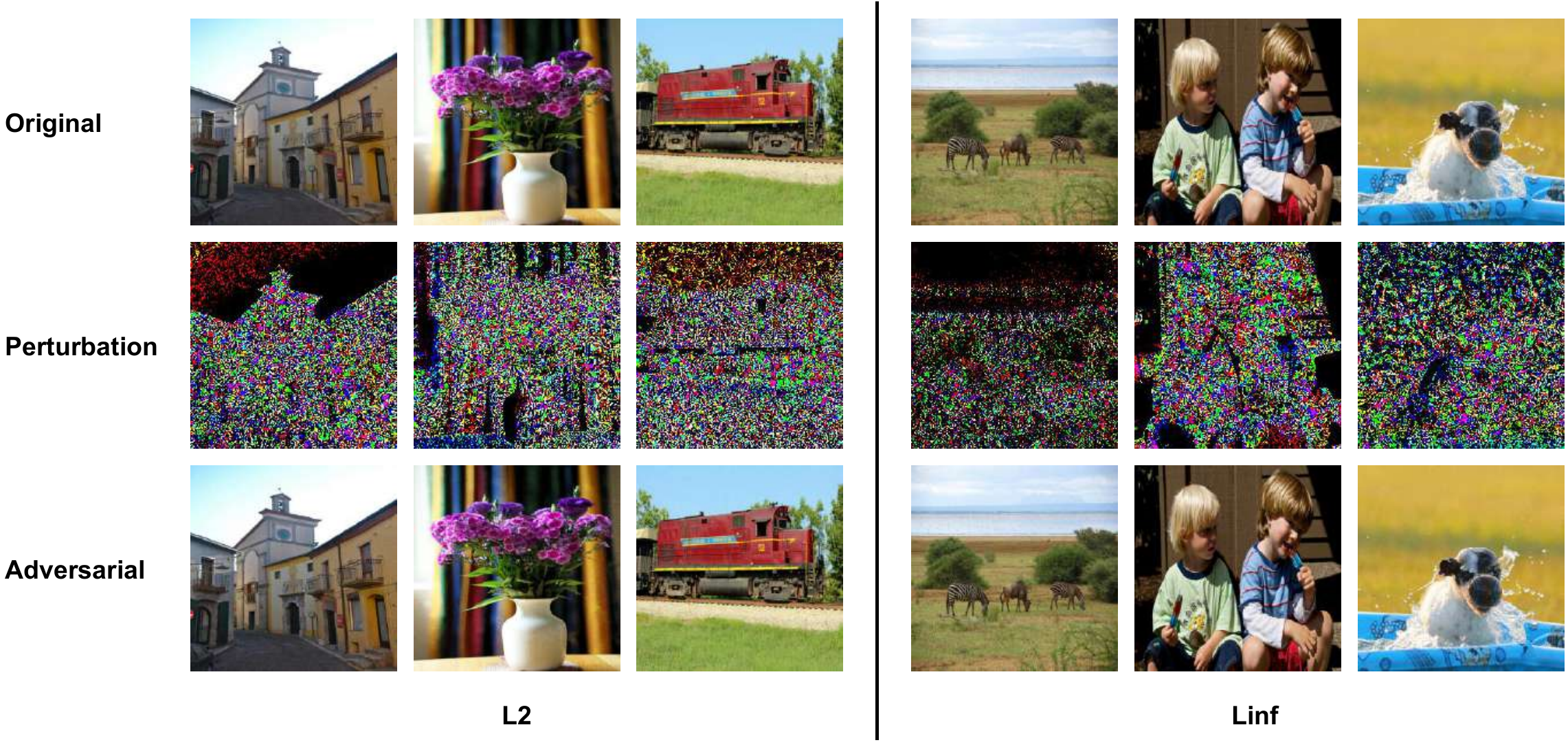}
    \caption{The generated adversarial examples}
    \label{fig:show}
\end{figure}

In this section, we discuss the quality of the generated adversarial inputs based on human perception.
For that purpose, we randomly select six adversarial images and show the selected images in \figref{fig:show} (all generated adversarial images can be found on our website).
The first column shows the benign images, the second column shows the adversarial perturbations used against each benign image, and the third column shows the resultant adversarial images. 
From the results in the first and the third rows, we observe that the added perturbation is not perceptible to humans.

\subsection{More Studies}

\subsubsection{Accuracy \texttt{VS.} Efficiency}

In this section, we evaluate the relationship between accuracy attack and efficiency attack.
The results in \tabref{tab:mainres} show that accuracy-based adversarial examples may not affect NICG model efficiency.
In this section, we evaluate whether efficiency-based adversarial examples will affect NICG model accuracy.
Specifically, we measure the BLEU scores~\cite{papineni2002bleu} of the adversarial examples and the benign examples.
\tabref{tab:acc} shows the BLEU scores of benign examples and adversarial examples generated by \tool. From the results, we can observe that our attack significantly reduces the accuracy of the victim NICG model, decreasing the BLEU scores up to 100\%. This observation indicates that the accuracy-based attack can impact only the NICG model accuracy without reducing efficiency. In contrast, our efficiency-based attack, \tool, can effectively reduce the model efficiency and significantly lower the accuracy.

\begin{table}[h]
  \centering
  \caption{BLEU scores of benign and adversarial examples}
    \resizebox{.33\textwidth}{!}{
    \begin{tabular}{c|c|c|c|c}
    \toprule
    \multicolumn{2}{c|}{Subjects} & benign & adversarial & decreasae \\
    \hline
    \multirow{2}[2]{*}{A} & L2    & 0.17  & 0.00  & 100.00  \\
          & Linf  & 0.17  & 0.01  & 93.08  \\
    \hline
    \multirow{2}[2]{*}{B} & L2    & 0.20  & 0.00  & 99.02  \\
          & Linf  & 0.20  & 0.02  & 90.94  \\
    \hline
    \multirow{2}[2]{*}{C} & L2    & 0.10  & 0.00  & 98.77  \\
          & Linf  & 0.10  & 0.01  & 90.95  \\
    \hline
    \multirow{2}[2]{*}{D} & L2    & 0.11  & 0.01  & 91.43  \\
          & Linf  & 0.11  & 0.03  & 69.15  \\
    \bottomrule
    \end{tabular}}
  \label{tab:acc}%
\end{table}%

\subsubsection{Hyper-Parameter Sensitively}
In this section, we evaluate the effectiveness of the adversarial examples under different hyper-parameter settings.
Specifically, we set the hyper-parameter $\lambda = [1.0\times10^3, 1.0\times10^4, 1.0\times10^5]$ and run \tool to generate adversarial examples.
From the results in \tabref{tab:sen}, we observe that the adversarial examples generated under different hyper-parameter settings show a stable performance, which implies \tool is not sensitive to hyper-parameter settings.

\begin{table}[htbp]
  \centering
  \caption{Effectiveness under different hyper-parameters }
  \resizebox{0.46\textwidth}{!}{
    \begin{tabular}{c|c|c|rrr}
    \toprule
    \multicolumn{1}{l|}{Subject ID} & Norm  & Metric & 10    & 100   & 1000 \\
    \hline
    \multirow{6}[4]{*}{A} &       & I-Loop & 483.86  & 483.86  & 483.86  \\
          & L2    & I-Latency(CPU) & 189.76  & 198.76  & 198.35  \\
          &       & I-Latency(GPU) & 288.43  & 290.40  & 300.32  \\
\cline{2-6}          &       & I-Loop & 360.21  & 354.11  & 344.11  \\
          & Linf  & I-Latency(CPU) & 190.32  & 202.81  & 190.43  \\
          &       & I-Latency(GPU) & 250.32  & 241.90  & 227.32  \\
    \hline
    \multirow{6}[3]{*}{B} &       & I-Loop & 479.32  & 481.32  & 481.32  \\
          & L2    & I-Latency(CPU) & 89.31  & 87.37  & 85.42  \\
          &       & I-Latency(GPU) & 225.43  & 223.38  & 220.43  \\
\cline{2-6}          &       & I-Loop & 283.24  & 271.19  & 271.19  \\
          & Linf  & I-Latency(CPU) & 10.21  & 8.32  & 8.11  \\
          &       & I-Latency(GPU) & 75.43  & 71.77  & 69.31  \\
    \hline
    \multirow{6}[2]{*}{C} &       & I-Loop & 435.56  & 433.58  & 433.58  \\
          & L2    & I-Latency(CPU) & 166.42  & 155.61  & 148.31  \\
          &       & I-Latency(GPU) & 300.32  & 297.37  & 297.32  \\
\cline{2-6}          &       & I-Loop & 388.31  & 379.81  & 370.54  \\
          & Linf  & I-Latency(CPU) & 91.31  & 90.73  & 89.31  \\
          &       & I-Latency(GPU) & 222.32  & 211.41  & 210.32  \\
    \hline
    \multirow{6}[3]{*}{D} &       & I-Loop & 410.23  & 408.90  & 408.90  \\
          & L2    & I-Latency(CPU) & 156.42  & 155.49  & 154.43  \\
          &       & I-Latency(GPU) & 199.32  & 192.58  & 178.31  \\
\cline{2-6}          &       & I-Loop & 117.23  & 115.02  & 113.13  \\
          & Linf  & I-Latency(CPU) & 22.12  & 21.45  & 18.23  \\
          &       & I-Latency(GPU) & 56.43  & 55.36  & 50.13  \\
    \bottomrule
    \end{tabular}%
    }
  \label{tab:sen}%
\end{table}%

\section{Discussion} 

\noindent\textbf{Application.}  
Recently, NICG models have been widely deployed on resource-constrained devices; thus, the need for efficiency robustness evaluation is essential. For example, many mobile applications are developed to help visually impaired persons;  most of those applications rely on the NICG model to provide image explanations to a person. In a situation like crossing a road, the response time should be minimum. Otherwise, fatal accidents can happen. Therefore, the evaluation of efficiency robustness is needed to avoid these scenarios.

\noindent\textbf{Limitation.} 
\tool is a white-box approach, \ie, \tool needs to access the victim NICG model parameters to generate adversarial examples. As we have not evaluated the transferability of the attack, we can not conclude that our attack can also be used in the black-box setting.
However, as \tool is designed for evaluating robustness instead of attacking, the white-box assumption is valid for \tool. 
We leave the black-box evaluation for future work.

\section{Conclusion}

In this paper, our objective is to evaluate the efficiency robustness of NICG models. For this purpose, we propose \tool that generates adversarial efficiency decreasing inputs explores a potential vulnerability of NICG models, \ie, the efficiency of NICG models is inversely proportional to the length of NICG output sequences. Based on the extensive evaluation, we can notice that \tool can generate inputs that significantly decrease NICG models' efficiency. To the best of our knowledge, this is the first adversarial attack exploring the efficiency robustness of NICG models.

\section*{Acknowledgments}

This work was partially supported by Siemens Fellowship and NSF grant CCF-2146443.

{\small
\bibliographystyle{ieee_fullname}
\bibliography{egbib}
}

\clearpage
\section*{Appendix}

\begin{appendices}

\section{More Evaluation Results}
\begin{figure}[h]
    \centering
    \includegraphics[width=0.49\textwidth]{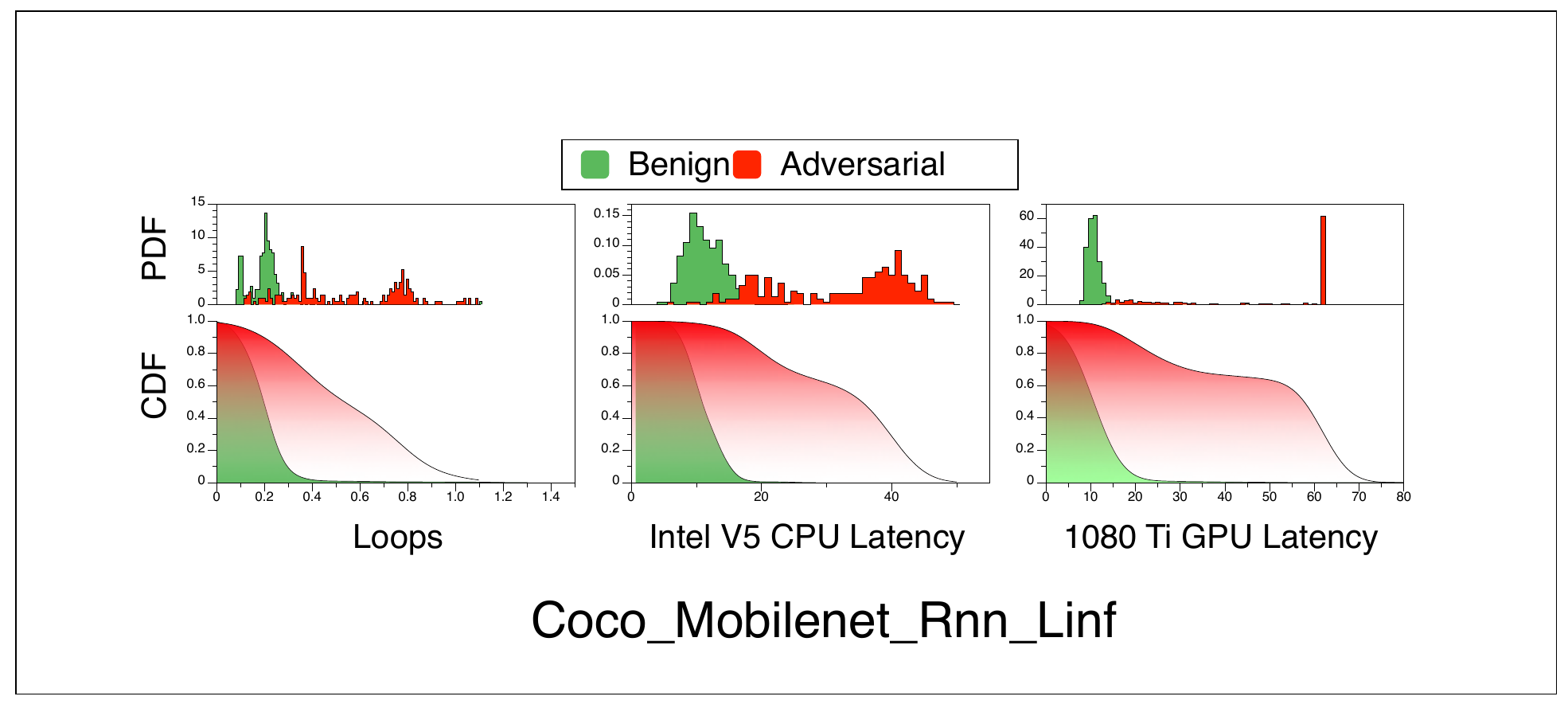}
    \caption{Efficiency Distribution}
    \label{fig:append_1}
\end{figure}
\begin{figure}[h]
    \centering
    \includegraphics[width=0.49\textwidth]{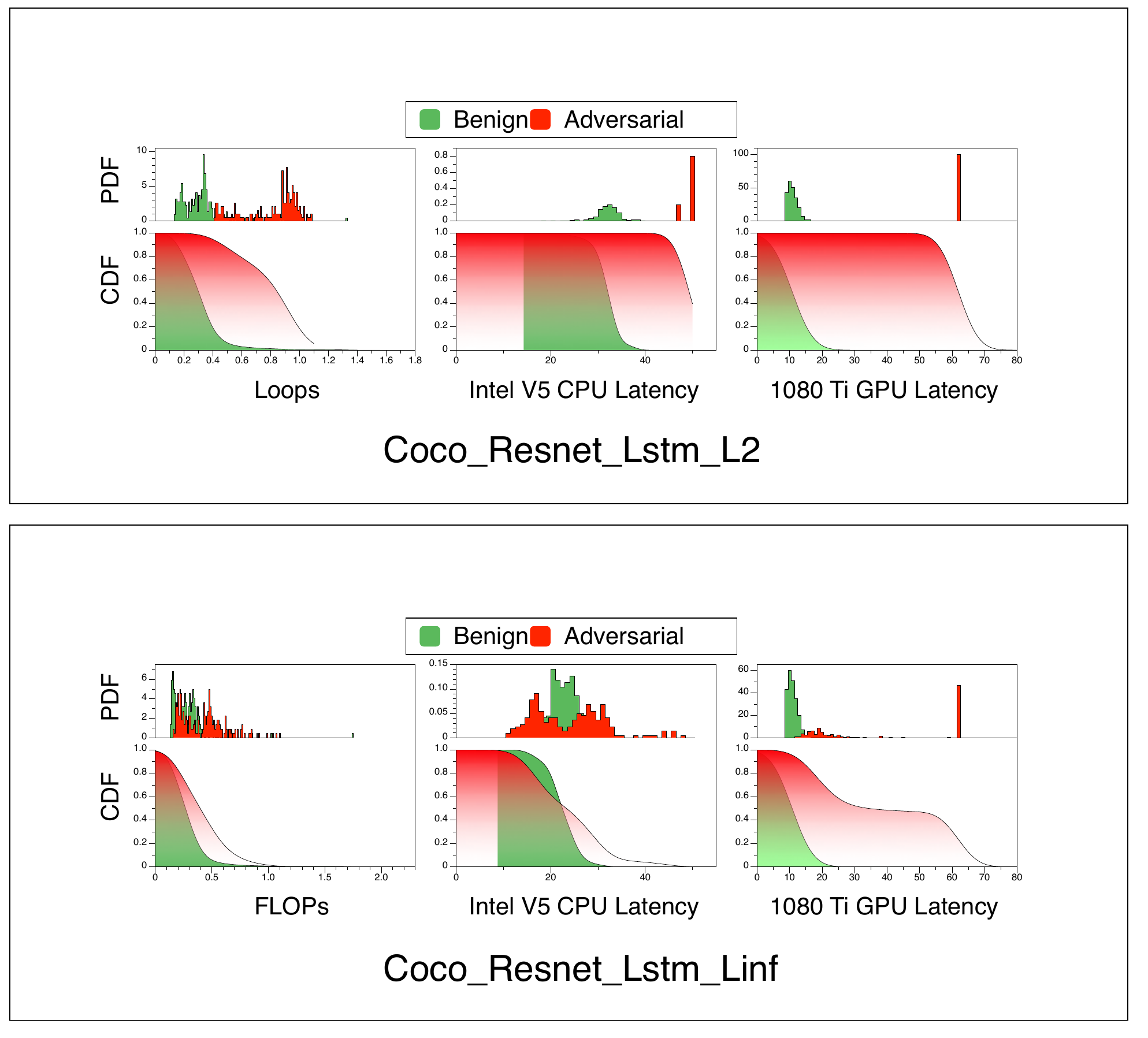}
    \caption{Efficiency Distribution}
    \label{fig:append_2}
\end{figure}
\begin{figure}[h]
    \centering
    \includegraphics[width=0.49\textwidth]{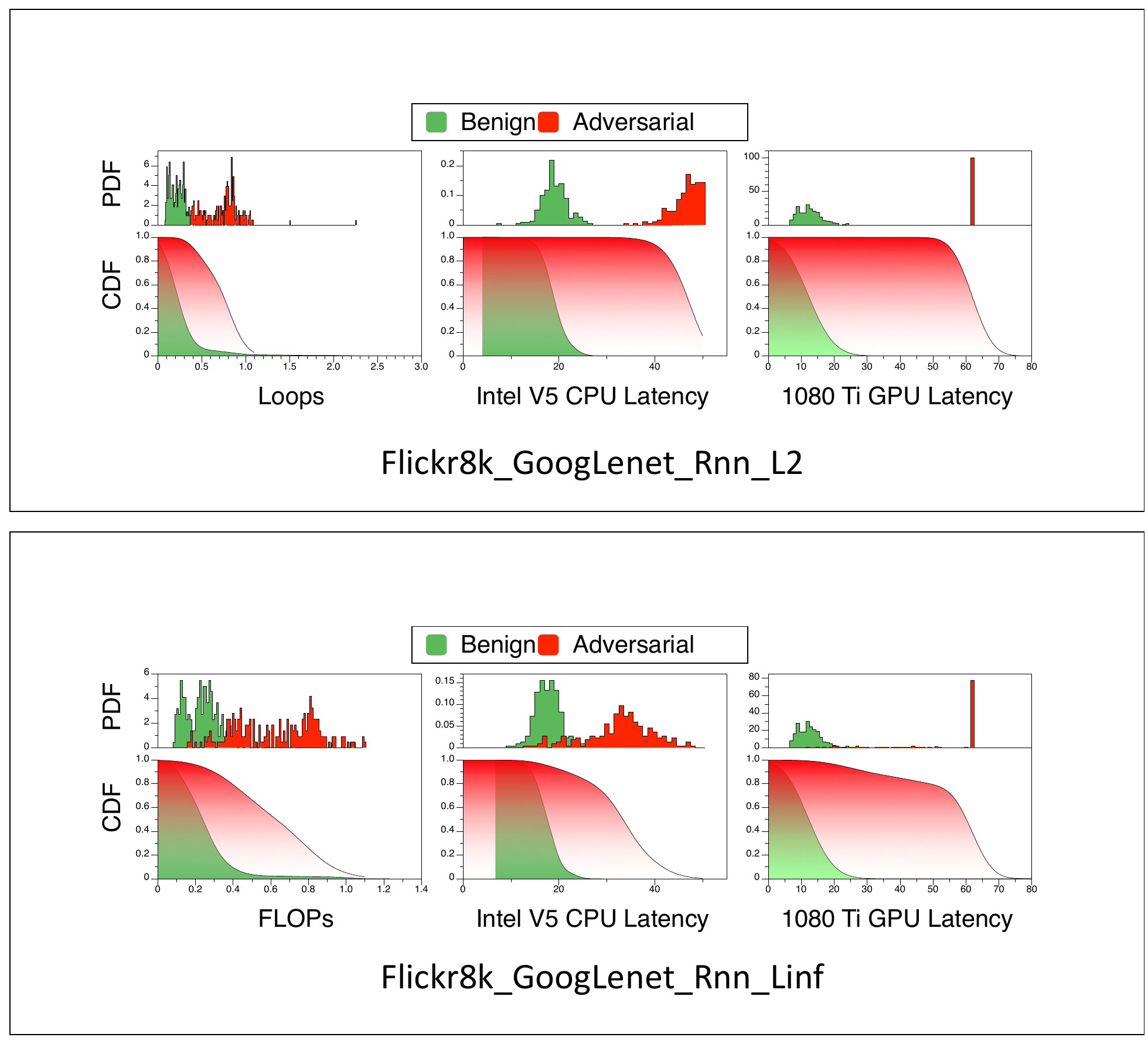}
    \caption{Efficiency Distribution}
    \label{fig:append_3}
\end{figure}
\begin{figure}[h]
    \centering
    \includegraphics[width=0.49\textwidth]{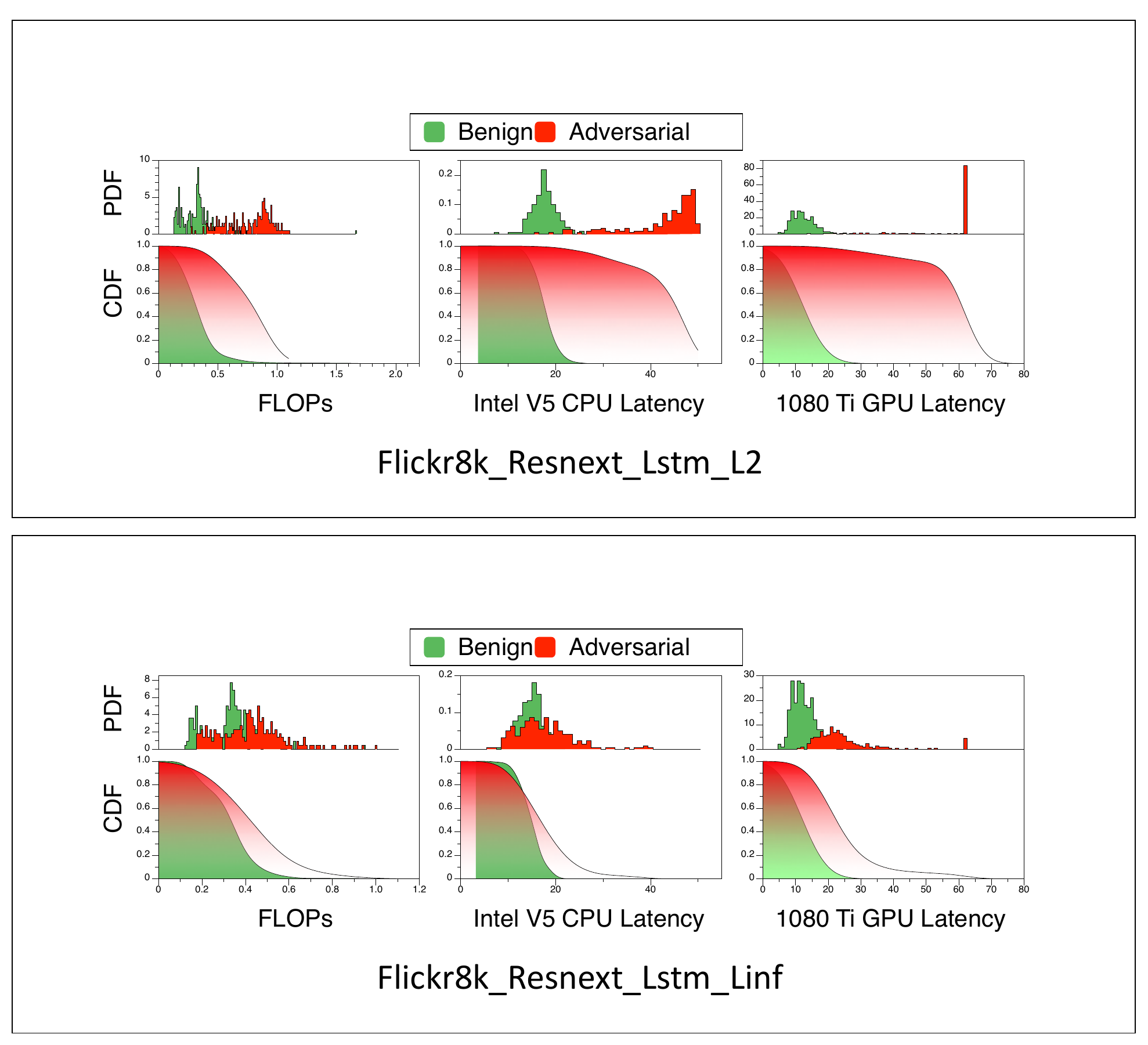}
    \caption{Efficiency Distribution}
    \label{fig:append_4}
\end{figure}

Figure \ref{fig:append_1}, \ref{fig:append_2}, \ref{fig:append_3}, \ref{fig:append_4} show the efficiency distribution of benign images and the generated adversarial images.

The first and second rows represent the Probability Density Function~(PDF) and Cumulative Distribution Function~(CDF) results respectively. 
The area under the CDF curve indicates the efficiency of the NICG model, a larger area indicates the NICG model is less efficiency.
The green area denotes the distribution of benign examples,  and the red areas represent the distribution of adversarial examples generated by \tool.
From the results, we could observe that adversarial examples extremely change the FLOPs and latency distribution of NICG model.
This observation  is consistent with the results in \figref{fig:distribution}.

\section{More Adversarial Examples}

\figref{fig:more_adv} shows more generated adversarial examples, we provide more adversarial examples on the zip files.

\begin{figure*}[ht]
    \centering
    \includegraphics[width=0.95\textwidth]{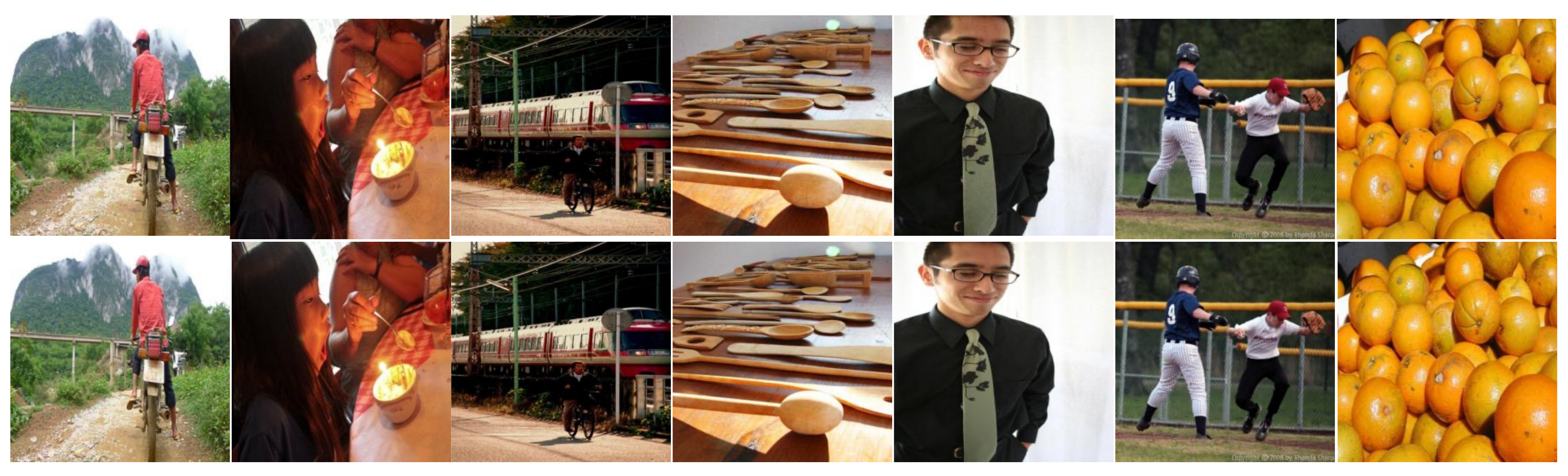}
    \caption{Generated adversarial examples}
    \label{fig:more_adv}
\end{figure*}
\end{appendices}

\end{document}